\documentclass[10pt,letterpaper,twocolumn]{article}
\usepackage[latin9]{inputenc}
\pagestyle{empty}
\usepackage{amsmath}
\usepackage{amssymb}
\usepackage{graphicx}
\usepackage{esint}
\usepackage[unicode=true,pdfusetitle,
 bookmarks=false,
 breaklinks=true,pdfborder={0 0 1},backref=section,colorlinks=false]
 {hyperref}
\hypersetup{
 pagebackref=true,letterpaper=true,colorlinks}

\makeatletter

\pdfpageheight\paperheight
\pdfpagewidth\paperwidth

\providecommand{\tabularnewline}{\\}


\usepackage{cvpr}\usepackage{times}\usepackage{epsfig}


\usepackage{epsfig}\graphicspath{{./images/},{./images/general/},{./images/randgraph/},{./images/hepth/}}
\usepackage{bbm}\usepackage{caption}\usepackage{subcaption}
\usepackage{algorithm}\usepackage{algorithmic}

\cvprfinalcopy 


\newcommand{\mb}{\mathbf}

\newtheorem{mythm}{Theorem}

\newtheorem{mydef}{Definition}

\newcommand{\denselist}{\itemsep 0pt\parsep=1pt\partopsep 0pt}
\newcommand{\bitem}{\begin{itemize}\denselist}
\newcommand{\eitem}{\end{itemize}}
\newcommand{\benum}{\begin{enumerate}\denselist}
\newcommand{\eenum}{\end{enumerate}}

\def\cl{{\cal L}}

\def\ck{{\cal K}}

\def\real{\mathbb{R}}

\def\g{\textnormal{G}}
\def\d{\textnormal{d}}
\def\st{\textnormal{s.t.}}

\ifcvprfinal\fi

\makeatother

\begin{document}

\title{Stable and Informative Spectral Signatures for Graph Matching}

\author{Nan Hu \hspace{20pt} Raif M. Rustamov \hspace{20pt} Leonidas Guibas\\
 Stanford University\\
 Stanford, CA, USA\\
 \texttt{\small nanhu@stanford.edu, rustamov@stanford.edu, guibas@cs.stanford.edu}{\small {}
}}
\maketitle
\begin{abstract}
In this paper, we consider the approximate weighted graph matching
problem and introduce stable and informative first and second order
compatibility terms suitable for inclusion into the popular integer
quadratic program formulation. Our approach relies on a rigorous analysis
of stability of spectral signatures based on the graph Laplacian.
In the case of the first order term, we derive an objective function
that measures both the stability and informativeness of a given spectral
signature. By optimizing this objective, we design new spectral node
signatures tuned to a specific graph to be matched. We also introduce
the pairwise heat kernel distance as a stable second order compatibility
term; we justify its plausibility by showing that in a certain limiting
case it converges to the classical adjacency matrix-based second order
compatibility function. We have tested our approach on a set of synthetic
graphs, the widely-used CMU house sequence, and a set of real images.
These experiments show the superior performance of our first and second
order compatibility terms as compared with the commonly used ones. 
\end{abstract}

\section{Introduction}

\label{sec:intro}

Graph matching techniques have been widely used in computer vision
in contexts such as 2D and 3D image analysis, object recognition,
biomedical identification, and object tracking. Most practical problems
require using approximate matching algorithms that can extract meaningful
correspondences even when the graphs under consideration are not isomorphic.
Therefore, when using the common quadratic assignment formulation
of graph matching problem, it is desirable to have informative first
and second order compatibility terms that are stable to violations
of isomorphism.

Spectral approaches \cite{umeyama88,cour2006,emms2009,gori2005,wilson2008,egozi2012}
have been widely used in graph matching literature. Recently, spectral
node signatures (first order compatibility terms) such as the heat
kernel signature (HKS) \cite{sog-hks-09} and the wave kernel signature
(WKS) \cite{Aubry2011} have been drawing significant attention for
matching of 3D shapes in computer graphics. These constructions are
inspired by physical processes (e.g. heat propagation) on graphs,
and are expected to inherit the physical processes' stability to perturbations
of the underlying graph. However, the analysis of stability of spectral
signatures in general is lacking, which hinders the ability to design
spectral node signatures that are not derived from physical processes.
While there has been some work on learning such signatures for 3D
shapes \cite{10.1109/TPAMI.2013.148, Aflalo2011}, they require a training set
of one form or another, which may be difficult to obtain.

The goal of our work is to establish general theoretical results about
the stability of first and second order terms constructed from graph
Laplacians \cite{hu2013arxiv}, and to use these results in a practical graph matching
framework; we are especially interested in designing node signatures
tuned to the graphs being matched. We start out with a family of spectral
node signatures, which we call the Laplacian Family Signatures (LFS).
This family is parametrized by a real-valued function of two variables
-- the construction filter; particular choices of the construction
filter yield the HKS and WKS as special cases. We first establish
a stability result for the LFS, obtaining an upper bound on how much
the signature of a node can change under perturbations of the underlying
graph. Next, we relax the bounds in our stability theorem, which allows
us to encode both the requirements of stability and informativeness
in a single objective function. By optimizing this objective, we obtain
a custom construction filter and so a custom spectral node signature
for matching the graph under consideration.

While the steps above yield a stable first order compatibility term,
stability of the second order compatibility term is as important.
Another contribution of this work is to introduce the pairwise heat
kernel distance as a second order compatibility term suitable for
inclusion into the Integer Quadratic Program (IQP) formulation of
the graph matching problem. This term can be shown to be stable to
perturbations of the underlying graph. To justify its use as a second
order term we prove that in a certain limiting case, the pairwise
heat kernel distance reproduces the commonly used term obtained from
the adjacency matrices.

Our overall practical graph matching approach has a number of benefits.
First, our approach is based on rigorous theoretical results on stability
of node and edge signatures, that are of independent interest. Second,
despite the fact that our construction filter is non-parametric, we
obtain a simple convex optimization problem that can be solved efficiently.
Third, in contrast to previous methods, e.g. \cite{Aflalo2011}, we do not require a training
set, but base our node signature optimization on the given graph to
be matched. Ideally, signature design should be based on a representative
``average'' graph of the collection of graphs arising in a specific
context. However, computing the average graph itself requires reliable
matchings between all graphs in the collection, leading to a chicken-and-egg
problem. To circumvent this, we hypothesize that an attempt to match
a given graph to other graphs in a collection -- if it is to be at
all successful -- is indicative of shared structure, and this should
allow optimizing signatures based solely on the graph that is being
matched. Our experimental results confirm that signatures optimized
in such a way provide superior results over un-optimized signatures
in a number of settings.

The rest of the paper is organized as follows. After reviewing previous
work in Section \ref{ssect:relwork}, we introduce and analyze the
first order compatibility terms in Section \ref{sect:specdscp}. The
second order compatibility term is introduced in Section \ref{sec:Noise-Tolerant-Second}.
The IQP for quadratic assignment formulation of graph matching problem
is set up in Section \ref{sect:mscheme}. We present our experimental
results on three different graph matching tasks in Section \ref{sect:exp}.

\section{Related Work}

\label{ssect:relwork}

Node-based signatures have been popular in the context of graph matching.
Joilli et. al.\cite{jouili2009} proposed a signature composed of
the degree of the a node followed by the ordered weights of each incident
edge and padded with zero if necessary. 
Gori et. al. \cite{gori2005} constructed node signatures from the
steady state distributions of simulated random walks similar to PageRank.
Eshera \cite{eshera1984} built signatures for attributed relational
graphs (ARG). 
Shokoufandeh et. al \cite{shok1999} constructed feature-based node
signatures for bipartite matching. 
The same authors \cite{shok2001} later proposed a topological signature
vector (TSV) for directed acyclic graphs (DAG). Hu et. al \cite{hu2013}
considered graph matching from the learning of a signature-based proximity
matrix using disclosed known correspondences without explicitly computing
the signatures themselves.

Node signatures used in our work are most closely related to spectral
methods for graph matching. Among the pioneering works is Umeyama's
\cite{umeyama88} weighted graph matching algorithm 
, which was later generalized to graphs of different sizes \cite{luo2001,zhao2007}.
Robles-Kelly et. al. \cite{robles-kelly2002} ordered the nodes from
the steady state of Markov chain 
with the edge connectivity constraint and matched using edit distance;
in \cite{robles-kelly2005}, they ordered the nodes using the leading
eigenvector of the adjacency matrix. 
Qiu et. al \cite{qiu2006} considered using Fiedler vector 
to partition the graph for hierarchical matching; 
their method, however, works only on planar graphs. 
Cho et. al. \cite{cho2010} constructed reweighted random walks similar
to personalized PageRank on the association graph with the addition
of an absorbing node. They computed its quasi-stationary distribution
and discretized the continuous solution to find a matching. 
Emms et. al. \cite{emms2009} built an auxiliary graph from the two
graphs 
and simulated a quantum walk. 
Particle probability of each auxiliary node was used as the cost of
assignment for a bipartite matching. 

In a broader sense of relatedness to our work are other relaxation-based
matching algorithms. 
Gold and Rangarajan \cite{gold1996} proposed the well-know \textit{Graduated
Assignment Algorithm}. 
van Wyk et. al. \cite{vanwyk2004} designed a projection onto convex
set (POCS) based algorithm to solve IQP by successively projecting
the relaxed solution onto the convex constraint set. Schellewald et.
al. \cite{schell2005} constructed a semidefinite programming relaxation
of the IQP. Leordeanu et. al. \cite{leord2005} proposed a spectral
method to solve a relaxed IQP where they drop the linear inequality
constraint during relaxation and only incorporate it at the discretization
step. The idea was further extended by Cour et. al. \cite{cour2006},
where they added an affine constraint during relaxation. Zaslavskiy
et. al. \cite{zas2009} approached the IQP from the point of a relaxation
of the original least-square problem to a convex and concave optimization
problem on the set of doubly stochastic matrices. Leordeanu et. al.
\cite{leord2009} proposed an integer projected fixed point (IPFP)
algorithm to solve the quadratic assignment problem. Zhou et. al.
\cite{zhou2012} proposed a factorized graph matching algorithm to
solve IQP problem by factorizing the affinity matrix into the Kronecker
product of smaller matrices.

\section{First Order Compatibility}

\label{sect:specdscp} In this section, we introduce the Laplacian
Family Signatures (LFS) as a structural descriptor for graph nodes.
We then establish a stability theorem showing the robustness of these
descriptors to perturbations of the graph. Next, for a given graph,
we will show how the bounds established by our stability theorem together
with considerations of informativeness can be used to choose an optimal
signature from this family of signatures.

\subsection{Laplacian Family Signatures}

Consider one of the graphs to be matched, say $G=(V,E)$. Let $w$
be the weights on edges, i.e. $w:E\mapsto\mathbb{R}^{+}$. The graph
Laplacian is defined as $\cl=D-A$, where $A$ is the graph adjacency
matrix, and $D$ is a diagonal matrix of total incident weights, i.e.
$D_{ii}=\sum_{j}{A_{ij}}$. $\cl$ has numerous useful properties
\cite{biyi2007}, of which most relevant to us is its symmetry and
positive semi-definiteness. This makes it possible to consider the
eigen-decomposition of $\cl$; we denote by $\{\lambda_{k},\phi_{k}\}_{k=1}^{|V|}$
the eigenpairs (eigenvalue and associated eigenvector) of the graph
Laplacian matrix $\cl$.

The eigenvalues and eigenvectors of the Laplacian matrix carry a wealth
of structural information about the underlying weighted graph. Our
goal is to use this information to obtain signatures for nodes of
the graph that are both stable and informative. We first start with
a very general definition of a family of signatures.

\label{ssect:lfs}\begin{mydef} \label{def:lfs} For a given real
valued function $h(\cdot;\cdot):\mathbb{R}_{+}^{2}\rightarrow\mathbb{R}$,
the Laplacian Family Signature (LFS) of a node $i\in V$ is a one-parameter
family of structural node descriptors that is defined by 
\begin{equation}
s_{i}(t)=\sum_{k}h(t;\lambda_{k})\phi_{k}(i)^{2}.\label{eqn:lfs}
\end{equation}
We refer to $h(\cdot;\cdot)$ as the construction filter.

\end{mydef}

The Laplacian Family Signatures describe a given node's structural
relationship to its neighborhood at large (see e.g. physical interpretation
of signatures in the next subsection). Note that the signature of
a given node $i\in V$ is itself a function $s_{i}(\cdot):\mathbb{R}_{+}\rightarrow\mathbb{R}$.
Thus, two nodes $i$ and $a$ from the same or different graphs can
be compared by using any kind of distance/norm between the functions
$s_{i}(\cdot)$ and $s_{a}(\cdot)$.

A number of particular choices of the construction filter have been
considered in previous work. Choosing $h(t;\lambda_{k})=\exp(-t\lambda_{k})$
results in the heat kernel signature (HKS) \cite{sog-hks-09}, and
selecting $h(t;\lambda_{k})=\exp(-\frac{(t-\log\lambda_{k})^{2}}{2\sigma^{2}})$,
one obtains the wave kernel signature (WKS) \cite{Aubry2011}; these
signatures were shown to have desirable properties for applications
in 3D shape analysis and matching. Assuming that $h(t;\lambda_{k})=g(t\lambda_{k})$
is a band-pass filter with a special behavior as in \cite{HAMMOND2011},
we can easily obtain another signature -- the wavelet kernel signature.

It is clear that a plethora of such descriptors can be obtained by
simply varying the construction filter. An important question is then
what choice of the filter is optimal in one or another sense. We address
this issue in the following two subsections by considering two conflicting
requirements -- the informativeness and stability of signatures.

\subsection{\label{sub:General-Stability-Result}General Stability Result}

The LFS signatures are naturally intrinsic: if two graphs are isomorphic,
then the signatures of corresponding nodes are the same. However,
for a signature to be practically useful, it should also be stable
under perturbations of the graph. Stability of existing signatures,
such as HKS and WKS, are derived from intuitive considerations based
on physical interpretations. For example, HKS has an interpretation
in terms of a simulated heat diffusion process \cite{sog-hks-09}:
for each node, this signature captures the amount of heat left at
the node at various times (here $t$) assuming that a unit amount
is put on the node initially ($t=0$). WKS also has a physical interpretation
in terms of a quantum mechanical process on the graph \cite{Aubry2011}.
The stability then follows from the assumption that these physical
processes are stable under small perturbations of the underlying graph.

One of our main results is to establish the stability of LFS signatures
in general without a recourse to a physical interpretation. Importantly,
we obtain upper bounds on how much the signatures may change, and
we consider both the case of distinct and repeating eigenvalues.

\begin{mythm}\label{thm-stability-theorem}Let $A,A'$ be the adjacency
matrices of a pair of graphs, and $\cl,\cl'$ be the induced Laplacians.
Let the size of the graph be $n$, and $\lambda_{1}<\cdots<\lambda_{k}$
denote the $k$ distinct eigenvalues of $\cl$. Let $s_{i}(t)$ and
$s'_{i}(t)$ denote the LFS's of node $i$. Assume $\lambda_{j+1}-\lambda_{j}\geq\delta,\forall j$,
$\|A-A'\|_{F}\leq\frac{\epsilon}{\sqrt{n+1}}<\delta$, and $h(\cdot;\cdot)\in C^{2}(\real_{+}^{2})$.
If $k=n$ (non-repeating eigenvalues), we have 
\[
|s'_{i}(t)-s_{i}(t)|\leq C_{0}(\delta,t)\epsilon,
\]
where $C_{0}(\delta,t)$ is a constant independent of $\epsilon$.
If $k<n$ (repeating eigenvalues), we have 
\[
|s_{i}(t)-s'_{i}(t)|\leq C_{1}(t)\left(\frac{\delta}{\delta-\epsilon}-1\right)+C_{2}(t)\epsilon,
\]
where $C_{1}(t),C_{2}(t)$ are constants depending only on $t$. \end{mythm}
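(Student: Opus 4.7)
The plan is to combine Weyl's inequality for eigenvalue perturbation, the Davis--Kahan $\sin\Theta$ theorem for eigenspace perturbation, and the smoothness of the construction kernel $h$. Weyl's inequality together with $\|\cl-\cl'\|_2\leq\|\cl-\cl'\|_F\leq\epsilon$ immediately yields $|\lambda_k-\lambda'_k|\leq\epsilon$ for every $k$. Because $h\in C^2(\real_+^2)$ and the spectrum lies in a compact interval of the form $[0,\|\cl\|_2+\epsilon]$, $h(t;\cdot)$ is Lipschitz in its second argument with some constant $L(t)$ and uniformly bounded by some $M(t)$.

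For the simple-spectrum case $k=n$, each eigenvalue is isolated by a gap of at least $\delta$, so Davis--Kahan produces unit eigenvectors $\phi'_k$ of $\cl'$ (after sign alignment) satisfying $\|\phi_k-\phi'_k\|_2 \leq C\epsilon/\delta$. I would then split each summand as
\[
h(t;\lambda_k)\phi_k(i)^2-h(t;\lambda'_k)\phi'_k(i)^2 = \bigl(h(t;\lambda_k)-h(t;\lambda'_k)\bigr)\phi_k(i)^2 + h(t;\lambda'_k)\bigl(\phi_k(i)^2-\phi'_k(i)^2\bigr),
\]
bound the first term by $L(t)\epsilon$ (using Lipschitz continuity and $\phi_k(i)^2\leq 1$) and the second by $2M(t)\|\phi_k-\phi'_k\|_2$ via $|\phi_k(i)^2-\phi'_k(i)^2|\leq 2|\phi_k(i)-\phi'_k(i)|$. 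Summing over the $n$ indices collapses into the desired bound $C_0(\delta,t)\epsilon$.

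The main obstacle is the repeated-eigenvalue case $k<n$, where individual eigenvectors inside a multi-dimensional eigenspace are not uniquely defined, so one must argue in terms of spectral projectors. I would rewrite $s_i(t)=\sum_{j=1}^{k}h(t;\lambda_j)(P_j)_{ii}$ where $P_j$ is the orthogonal projector onto the eigenspace of the $j$-th distinct eigenvalue. Since $\epsilon<\delta$, the perturbed eigenvalues partition cleanly into $k$ clusters, each of width at most $\epsilon$ and mutually separated by at least $\delta-\epsilon$. Letting $P'_j$ be the projector of $\cl'$ onto the union of eigenspaces in cluster $j$, Davis--Kahan for spectral subspaces with this separation gives $\|P_j-P'_j\|_F \leq \epsilon/(\delta-\epsilon) = \delta/(\delta-\epsilon)-1$, which is exactly the form appearing in the statement.

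The final step is the decomposition
\[
s_i(t)-s'_i(t) = \sum_{j}h(t;\lambda_j)\bigl[(P_j)_{ii}-(P'_j)_{ii}\bigr] + \sum_{j}\sum_{\lambda'_m\in\text{cluster }j}\bigl[h(t;\lambda_j)-h(t;\lambda'_m)\bigr]\phi'_m(i)^2,
\]
so that the projector perturbation and the kernel-Lipschitz bound act on disjoint terms. Bounding the first sum by $M(t)\sum_j\|P_j-P'_j\|_F$ yields the $C_1(t)(\delta/(\delta-\epsilon)-1)$ contribution; bounding the second by $L(t)\epsilon\sum_m\phi'_m(i)^2\leq L(t)\epsilon$ yields the $C_2(t)\epsilon$ contribution. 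The delicate points are verifying that the clusters remain well-defined under perturbation (requiring $\epsilon<\delta$ exactly as hypothesized) and applying Davis--Kahan with the cluster-to-cluster gap $\delta-\epsilon$ rather than the unperturbed $\delta$; once these are in place, the remainder is routine.
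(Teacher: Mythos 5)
Your proposal is correct and follows essentially the same route as the paper: for the repeated-eigenvalue case the paper likewise passes to spectral projectors (via its ``subspace invariance'' lemma, which is your $(P_j)_{ii}$ reformulation), applies the Davis--Kahan $\sin\Theta$ theorem with the gap $\delta-\epsilon$ to get the $\delta/(\delta-\epsilon)-1$ term, and splits off the eigenvalue perturbation exactly as you do. The only substantive difference is in the simple-spectrum case, where the paper relies on first-order sensitivity formulas for eigenpairs and discards higher-order terms, whereas your Weyl-plus-Davis--Kahan argument gives a genuinely non-asymptotic bound; that is a modest improvement in rigor rather than a different proof.
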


See supplementary material for a proof. Even a more general stability
result when the number of graph nodes changes can be obtained using
lifting ideas similar to \cite{shok2001}.

\subsection{Signature Optimization}

In addition to stability, a practically useful signature has to be
informative. For a special category of compact Riemannian manifolds,
both HKS and WKS have been shown to be informative, e.g. Theorem 1
in \cite{sog-hks-09}, i.e. they fully characterize the shapes up
to isometry. For general graphs, however, the informativeness could
hardly be well-defined, as one could easily find counter examples
such that non-isomorphic graphs share the same LFS. In the context
of graph matching, nonetheless, we could still intuitively explain
informativeness of LFS as structural richness. This is why instead
of a single number, the LFS describes a node $i\in V$ by a function
$s_{i}(\cdot):\mathbb{R}_{+}\rightarrow\mathbb{R}$. However, this
does not directly guarantee the informativeness of the signature,
because the function values $s_{i}(t)$ may be strongly correlated
with each other at different values of $t$, reducing the information
content of the descriptor.

Interestingly, the requirements of informativeness and stability are
conflicting. Indeed, our stability theorem shows that the signatures
are more stable when the construction filter is smooth. On the other
hand, the information content is maximized when the construction filter
concentrates as much as possible (like a delta function) at a given
eigenvalue, thereby extracting information from non-overlapping frequency
bands of eigenvectors.

Both of these conflicting requirements can be captured by relaxing
the bounds established in Theorem \ref{thm-stability-theorem}. As
can be seen (c.f. supplementary material) in both distinct ($k=n$)
and repeated ($k<n$) eigenvalue cases, the upper bound depends on
two terms -- $\max_{j}|h(t,\lambda_{j})|$ and $\max_{j}|\frac{\partial}{\partial\lambda}h(t,\lambda_{j})|$.
Taking the $L^{1}$-norm as the distance metric when comparing signatures,
we can bound the change in signature of a node $i\in V$ under perturbation
(c.f. Theorem \ref{thm-stability-theorem}) as 
\begin{eqnarray}
d(s_{i},s'_{i}) & = & \int|s_{i}(t)-s'_{i}(t)|\d t \nonumber \\
 & \leq & \frac{2\epsilon}{\delta}\left(\int\max_{j}\left|h(t;\lambda_{j})\right|\d t\right)\nonumber\\
 &  & +\epsilon\left(\int\max_{j}\left|\frac{\partial}{\partial\lambda}h(t;\lambda_{j})\right|\d t\right)
\label{eqn:lfs-ubound}
\end{eqnarray}
To minimize this upper bound, we, nevertheless, have two contradicting
terms unless we use the trivial solution $h\equiv0$ everywhere. The
first term requires $h(t;\lambda)$ concentrate on $\lambda$, i.e.
it should be as narrow as possible on each distinct frequency $\lambda_{j}$
(informativeness). The second term, on the other hand, requires $h(t;\lambda)$
to be smooth at $\lambda$, i.e. it should be as wide as possible
at each distinct frequency $\lambda_{j}$ (stability).

When a meaningful matching between two graphs exists, it is natural
to assume that one of the graphs is a perturbation of the other. Based
on this intuition, we propose to pick one of the graphs as the source
graph, and to determine an optimal (with respect to the upper bound
above) construction filter $h(t;\lambda)$ for the source graph. Then
the graphs are matched using this optimal filter on both the source
and target graphs.

To find the optimal construction filter $h(\cdot;\cdot)$ for a given
source graph, we will minimize the upper bound above. To simplify
our optimization problem, note that for informativeness, we need $h(t;\lambda)$
to be large at $\lambda$ while fading away farther from $\lambda$,
and so we assume $h(t;\lambda)$ to be of the form $h(|t-\lambda|)$
with $h(0)=1$. We will also restrict $h$ to be positive and uniformly
decreasing. Finally, the requirement of $h(\cdot)\in C^{2}(\real_{+}^{2})$
is achieved by putting a bound on the second derivative $h''(\cdot)$.

The simplified optimization problem becomes that of finding $h:\mathbb{R}_{+}\rightarrow\mathbb{R}$
solving

\[
\begin{array}{ll}
\min & \mu\int\max_{j} h(|t-\lambda_{j}|)\d t+\int\max_{j}|h'(|t-\lambda_{j}|)|\d t\\
\st & h(0)=1,\: h(x)\geq0,\: h'(x)\leq0,\:|h''(x)|\leq c_{h}
\end{array}
\]
where the parameter $\mu=\frac{2}{\delta}$ is expressed in terms
of the eigen-gap $\delta$. In practice, we set $\delta$ to be the
average eigen-gap of the eigenvalue sequence. As it is clear from
the objective function, in addition to the eigen-gap, the optimal
filter will depend on the entire eigenvalue distribution of the Laplacian.
%
%

While the above problems may be hard to solve analytically, it can be straightforwardly
discretized and solved numerically as a convex optimization problem.
In practice, w.l.o.g., we consider $h(|t|)$ to be non-zero only on $[-T,T]$. Symmetric as it is, we only need to evenly sample $h$ on $[0, T]$ as a vector $\mb{h} = \left[h_0,\cdots,h_N\right]^\top$, with $h_0=1,h_N=0$. Let $\Delta h$ be the step between samples. The first and second order derivative could be numerically estimated as $\mb{h}' = \left[\frac{h_1-h_0}{\Delta h}, \cdots, \frac{h_N-h_{N-1}}{\Delta h}\right]^\top$ and $\mb{h}'' = \left[\frac{h_2-h_0}{\Delta h^2}, \cdots, \frac{h_N-h_{N-2}}{\Delta h^2}\right]^\top$. Let $\mb{h}_j$, $\mb{h}'_j$, $\mb{h}''_j$ be the $\lambda_j$-shifted version of $\mb{h}$, $\mb{h}'$, $\mb{h}''$ (padded with zeros if needed). The discretized optimization problem could therefore be written as
\[
\begin{aligned} 
\min~ & ~\mu\mb{1}^\top\max(\mb{h}_1,\cdots,\mb{h}_k) + \mb{1}^\top\max(\mb{h}'_1,\cdots,\mb{h}'_k)\\
\text{s.t.}~ & ~\mb{h}_i\geq0,~\mb{h}'_i\leq0,~-c_h\leq\mb{h}''_i\leq0,~h_0=1,~h_N=0
\end{aligned}
\]
where $\mb{1}$ is the unit vector and $\max(\cdot)$ is the element-wise max among its arguments.

Figure \ref{fig:opt-kernel}
shows the optimal filters for several types of randomly generated
graphs %
\footnote{The random graphs tested in Figure \ref{fig:opt-kernel} are as follows:
1) complete graph with uniformly distributed edge weights; 2) complete
graph with exponential distributed edge weights; 3) complete graph
with half normal distributed edge weights; 4) preferential attachment
model \cite{newman10networks} with uniformly distributed edge weights,
with the number of new connections being 2. The mean edge weight for
all random graphs is the same. %
}. The LFS with our optimized kernel is named ''adaLFS'' (for adaptive
LFS) in all the figures that follow. The improved matching results
on some well-known dataset can be seen in Figures \ref{fig:sigperf},
\ref{fig:houseperf}, and will be more thoroughly discussed in Section
\ref{sect:exp}.

\section{\label{sec:Noise-Tolerant-Second}Stable Second Order Compatibility}

In this section, we introduce a second order compatibility term based
on the heat diffusion process on graphs. Specifically, consider the
graph heat kernel $k_{t}(i,j)$, which measures the amount of heat
transferred from node $i$ to node $j$ after time $t$, assuming
a unit amount was placed at $i$ in the beginning ($t=0$). The heat
kernel has the following representation in terms of the eigen-decomposition
of the graph Laplacian: 
\[
k_{t}(i,j)=\sum_{k}\exp(-t\lambda_{k})\phi_{k}(i)\phi_{k}(j).
\]
Using an argument similar to the proof of Theorem \ref{thm-stability-theorem},
one can establish stability of $k_{t}(\cdot,\cdot)$ to perturbations
(see supplementary material). Therefore, it provides a natural choice
for a second order compatibility term.

\begin{mydef} Let $k_{t}$ and $k'_{t}$ be the heat kernels of $G=(V,E)$
and $G'=(V',E')$ respectively. For $i,j\in V$ and $a,b\in V'$,
the pairwise heat kernel distance is defined as 
\[
d_{t}^{\ck}(i,j,a,b)=\left|k_{t}(i,j)-k'_{t}(a,b)\right|.
\]
\end{mydef}

We will compare this term with the commonly used term based on the
adjacency matrices $A,A'$ of the graphs $G,G'$: 
\[
d^{A}(i,j,a,b)=\left|A_{ij}-A'_{ab}\right|.
\]
The following theorem shows that the pairwise heat kernel distance
$d_{t}^{\ck}$ is a stable approximation of the pairwise adjacency
distance $d^{A}$ (see supplementary material for a proof).

\begin{mythm} Let $d_{t}^{\ck}(i,j,a,b)$ and $d^{A}(i,j,a,b)$ be
the pairwise heat kernel distance and pairwise adjacency distance
for graph $G$ and $G'$, then the following holds: 
\[
\lim_{t\rightarrow0}\frac{d_{t}^{\ck}(i,j,a,b)}{t}=d^{A}(i,j,a,b).
\]
\end{mythm}

When $t$ is small, $d_{t}^{\ck}$ is a good approximation of $d^{A}$;
as $t$ increases, $d_{t}^{\ck}$ is smoothed out. In this way it
becomes stable, because in the ideal case when the graphs are isomorphic,
$d^{A}$ should be zero everywhere for matched pairs.

\begin{figure*}[ht]
\begin{subfigure}{.32\linewidth} \includegraphics[width=0.96\linewidth]{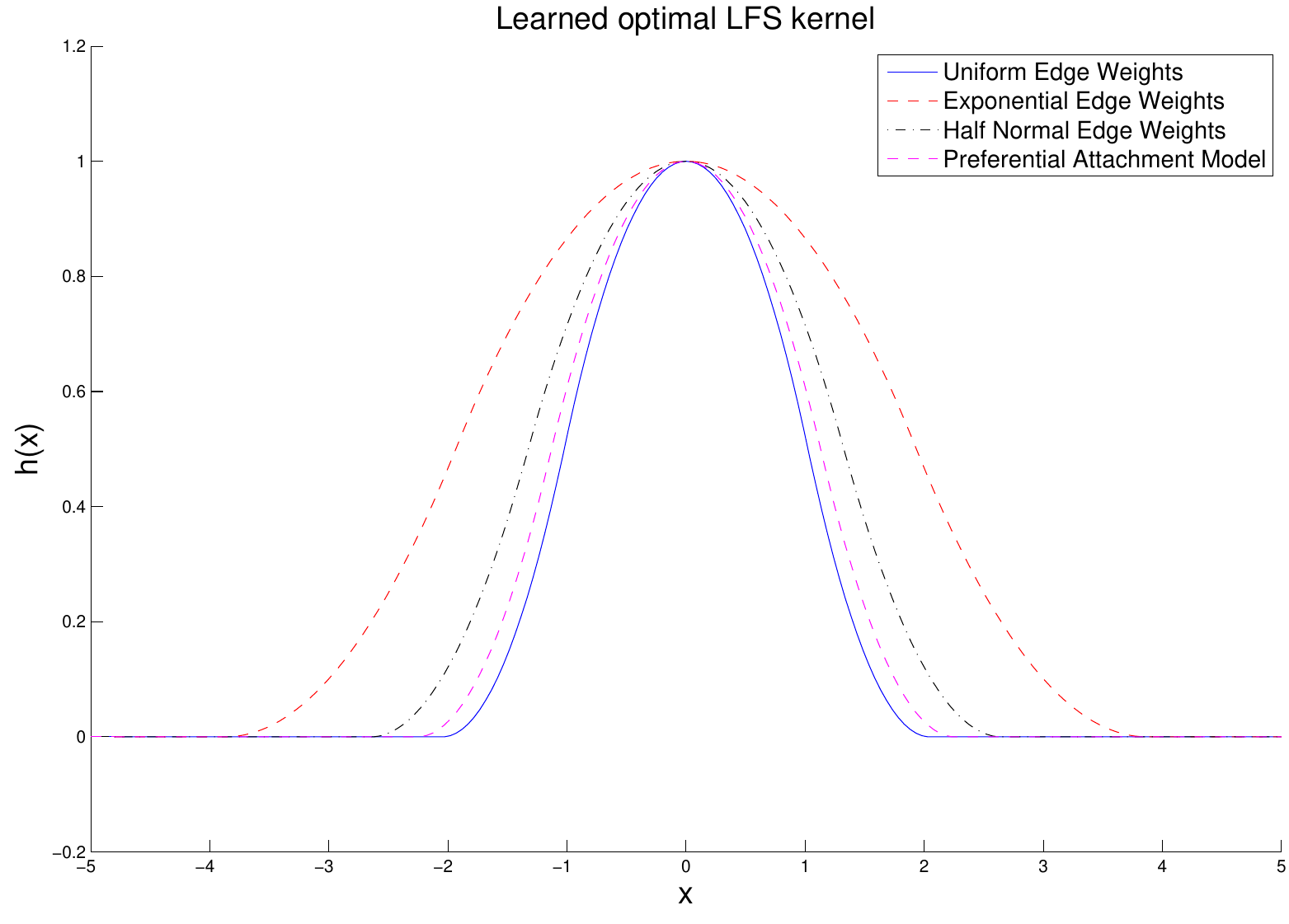}

\caption{}

\label{fig:opt-kernel} \end{subfigure} \begin{subfigure}{.32\linewidth}
\centering \includegraphics[width=0.96\linewidth]{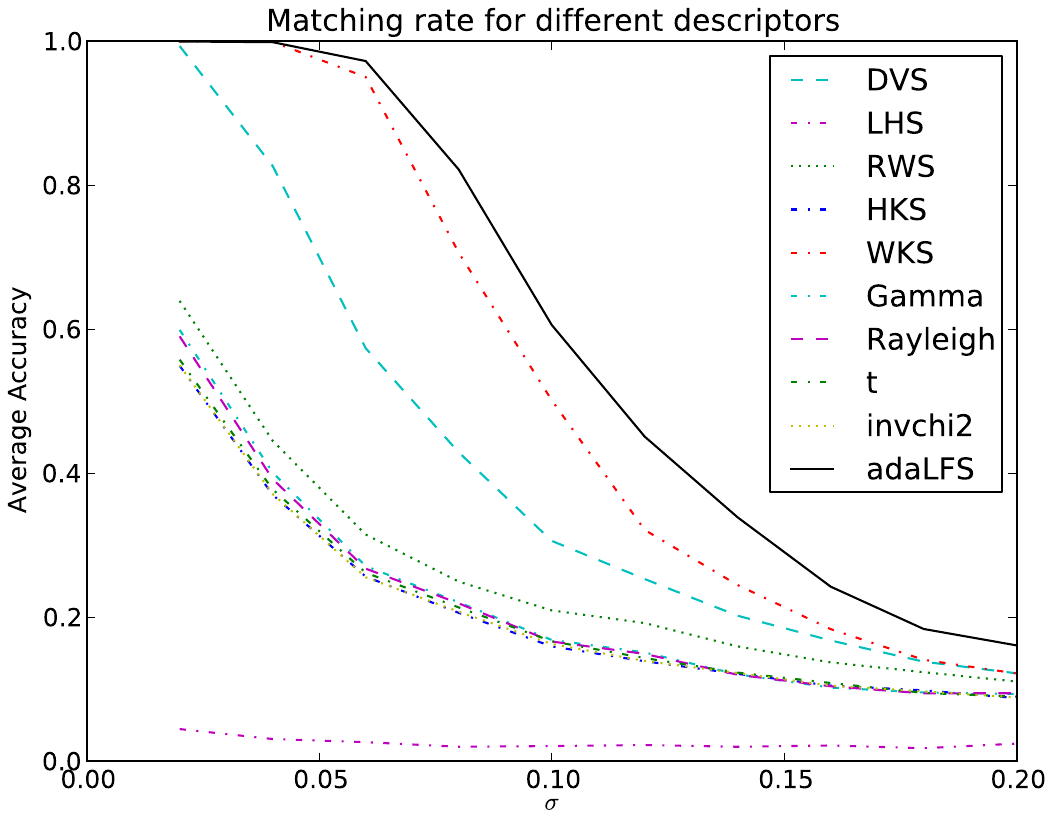} \caption{}

\label{fig:sigperf} \end{subfigure} \begin{subfigure}{.32\linewidth}
\centering \includegraphics[width=0.96\linewidth]{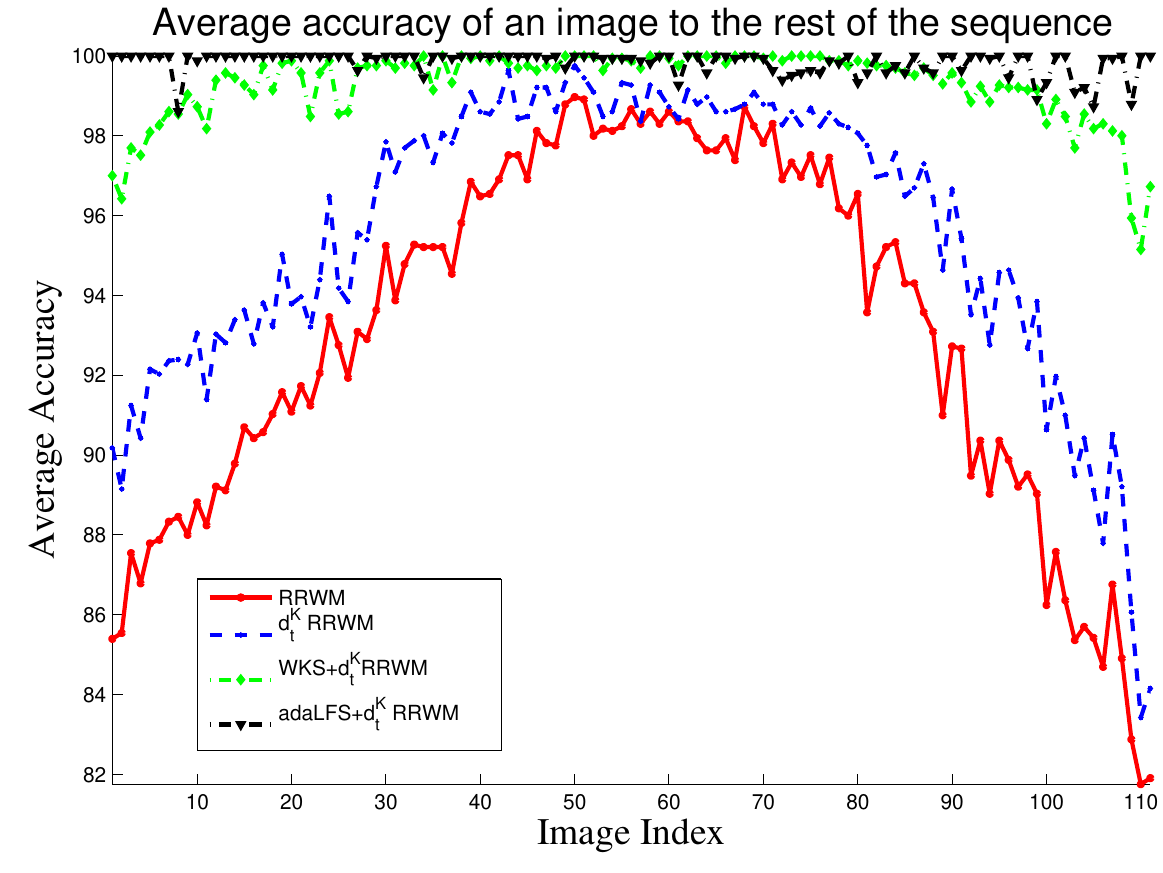} \caption{}

\label{fig:houseperf}

\end{subfigure} \caption{(a) Optimal construction filters for different types of random graphs
(red) and an image key points graph (blue). (b) Signature comparison
for $G(n,m)$ random graphs. (c) Matching accuracy for CMU House sequence.}
\end{figure*}

\section{Matching Scheme}

\label{sect:mscheme}

To directly compare the performance within LFS and with other node
signatures, the problem is cast as a bipartite graph matching problem
as in existing node signature based matching work \cite{shok1999,gori2005,jouili2009},
where costs are set as the distances between signatures. The problem
is solved using Hungarian algorithm \cite{kuhn1955}.

For practical matching, in addition to node signatures, we use the
pairwise heat kernel distance $d_{t}^{\ck}$ as the second order constraint
and formulate the problem as an integer quadratic program (IQP). Namely,
for two graphs $G=(V,E)$ and $G'=(V',E')$ to be matched and nodes
$i\in V,a\in V'$, let $d(i,a)$ be the distance between their node
signatures. We construct the compatibility matrix $W\in\real^{|V||V'|\times|V||V'|}$
as 
\[
W_{ia,jb}=\begin{cases}
d_{t}^{\ck}(i,j,a,b) & i\neq j,~a\neq b\\
\alpha d(i,a) & i=j,~a=b
\end{cases}
\]

Letting $\mb{X}\in\{0,1\}^{|V|\times|V'|}$ be the one-to-one mapping
matrix, and $\mb{x}\in\{0,1\}^{|V||V'|}$ its vectorization, the IQP
can be written as 
\[
\begin{aligned} & ~\mb{x}^{*}=\arg\min(\mb{x}^{\top}W\mb{x})\\
\text{s.t.}~ & ~\mb{x}\in\{0,1\}^{|V||V'|},~\forall i~\sum_{a\in V'}\mb{x}_{ia}\leq1,~\forall a~\sum_{i\in V}\mb{x}_{ia}\leq1
\end{aligned}
\]

As is well-known, this problem is NP-complete and there is a large
literature of approximation algorithms. In our experiments, we selected
a recently proposed algorithm, the reweighed random walk matching
(RRWM) \cite{cho2010} because of its superior performance compared
with other state-of-the-art approximation algorithms, including SM
\cite{leord2005}, SMAC \cite{cour2006}, HGM \cite{zass2008}, IPFP
\cite{leord2009}, GAGM \cite{gold1996}, SPGM\cite{vanwyk2004}. While finding a good solver for IQP is an interesting problem {\it per se}, we will not explore possibilities in this direction as it will inevitably shift the focus of our paper.

\section{Experiments}

\label{sect:exp}

We tested our descriptor on three different datasets: 1) synthetically
generated random graphs; 2) CMU House sequence for point matching;
3) feature matching using real images.

\subsection{Synthetic Random Graphs}

\label{ssect:randgraph} In this section, following the experimental
protocol of \cite{cho2010}, we synthetically generate random graphs
and perform a comparative study. In the first part of the experiment,
we use random graphs from Erdös-Rényi model $\g(n,m)$, where $m$
edges are randomly selected from all possible $\frac{n(n-1)}{2}$
edges. For each selected edge, we add a uniform random weight in the
range $[0,1]$. The graph is then perturbed by adding random Gaussian
noise ${\cal N}(0,\sigma^{2})$ on selected edges.

In this test, we compare the performance within our LFS and with existing
node signatures, namely degree vector signature (DVS) \cite{jouili2009},
local histogram signature (LHS) \cite{Wong2006}, random walk signature
(RWS) \cite{gori2005}. For comparisons within LFS family, we do not
limit ourselves to HKS and WKS, which are known special cases of LFS.
Noticing that both HKS and WKS construction filters are continuous
probability density functions (pdf) of some distribution, we include
a number of other pdfs: i) Gamma distribution $(t\lambda_{i})^{k-1}\exp\left(-t\lambda_{i}/\theta\right)$,
ii) Gaussian distribution $\exp\left(-(t-\mu(\lambda_{i}))^{2}/2\sigma^{2}\right)$,
iii) t-distribution $\left(1+(t\lambda_{i})^{2}/\theta\right)^{-(k+1)/2}$,
iv) Rayleigh distribution $(t\lambda_{i})^{k-1}\exp\left(-(t\lambda_{i})^{2}/2\theta^{2}\right)$,
and v) Inverse Chi-square distribution $(t\lambda_{i})^{-k/2-1}\exp\left(-\theta/2t\lambda_{i}\right)$.
This set of construction filter choices are by no means exhaustive;
we hope, however, it will illustrate the improved performance of our
adaptive LFS (adaLFS).

First, to directly compare the performance of node signatures, we
use bipartite matching as the matching scheme. In the experiment,
we set $n=50$ and $m$ is uniformly in $[400,1000]$, and generate
$100$ pairs of graphs. Fig. \ref{fig:sigperf} shows the average
accuracy (i.e. the fraction of correct matches over ground truth matches)
over the amount of noise added to the graph. As can be seen, our adaLFS
exhibits best performance of all node signatures considered. Since
WKS is the second best signature, we will include it in all of the
comparisons that follow.

\begin{figure*}[ht]
\centering \begin{subfigure}{.32\linewidth} \includegraphics[width=0.96\linewidth]{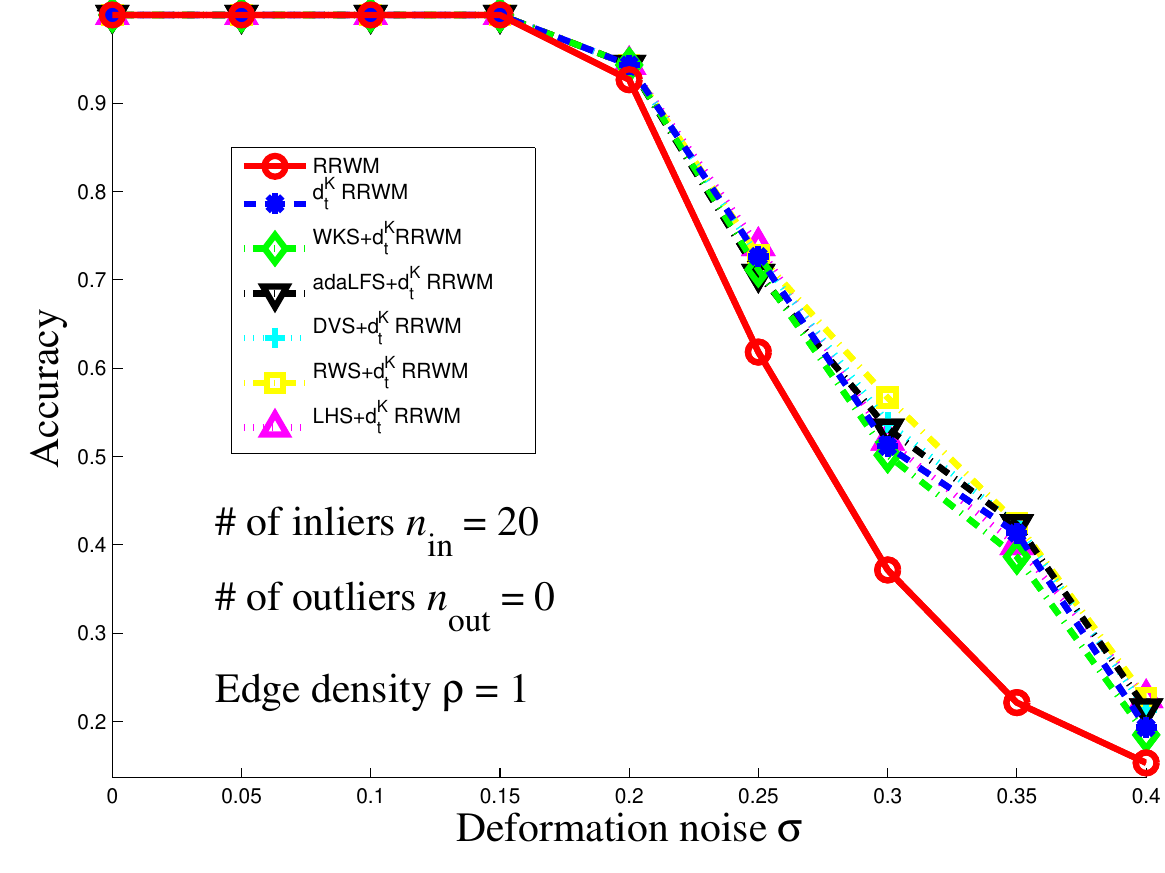}
\caption{\label{fig:def-noise}}

\end{subfigure} \begin{subfigure}{.32\linewidth} \includegraphics[width=0.96\linewidth]{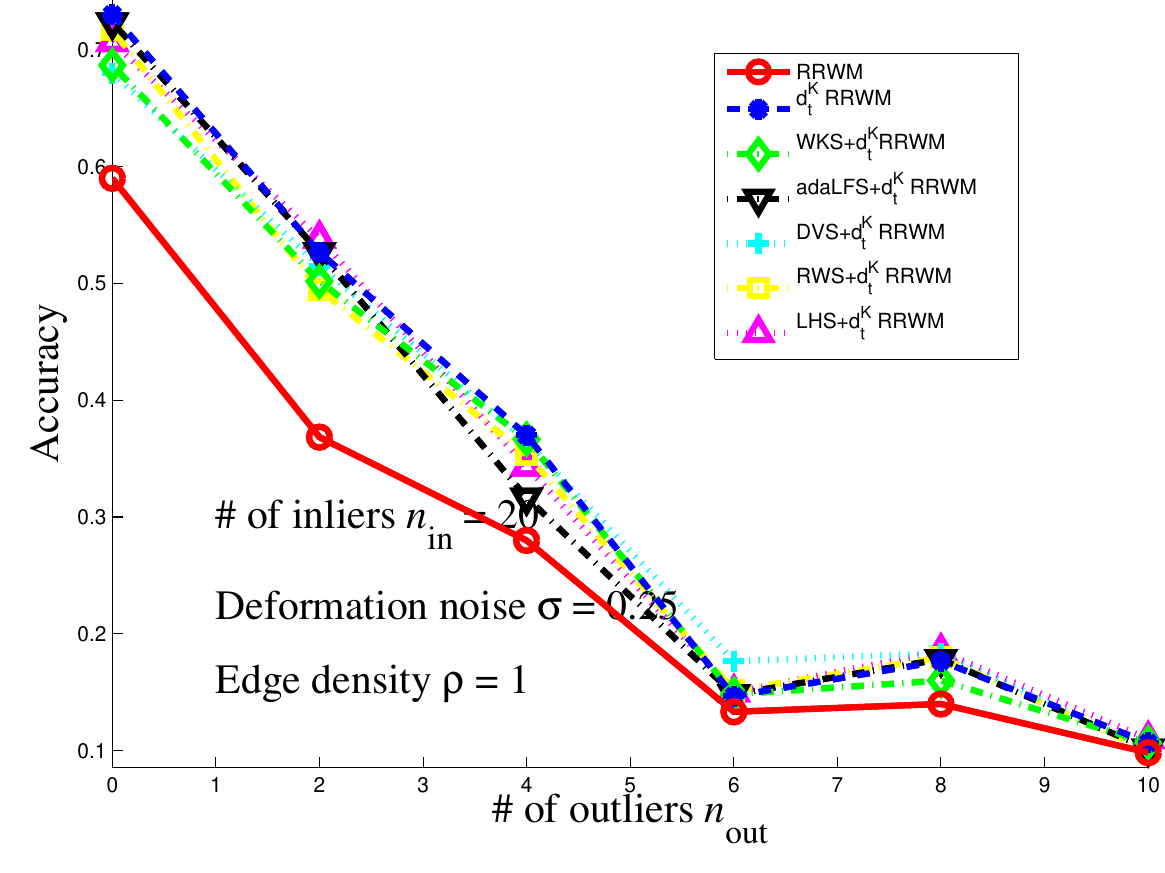}
\caption{}

\end{subfigure} \begin{subfigure}{.32\linewidth} \includegraphics[width=0.96\linewidth]{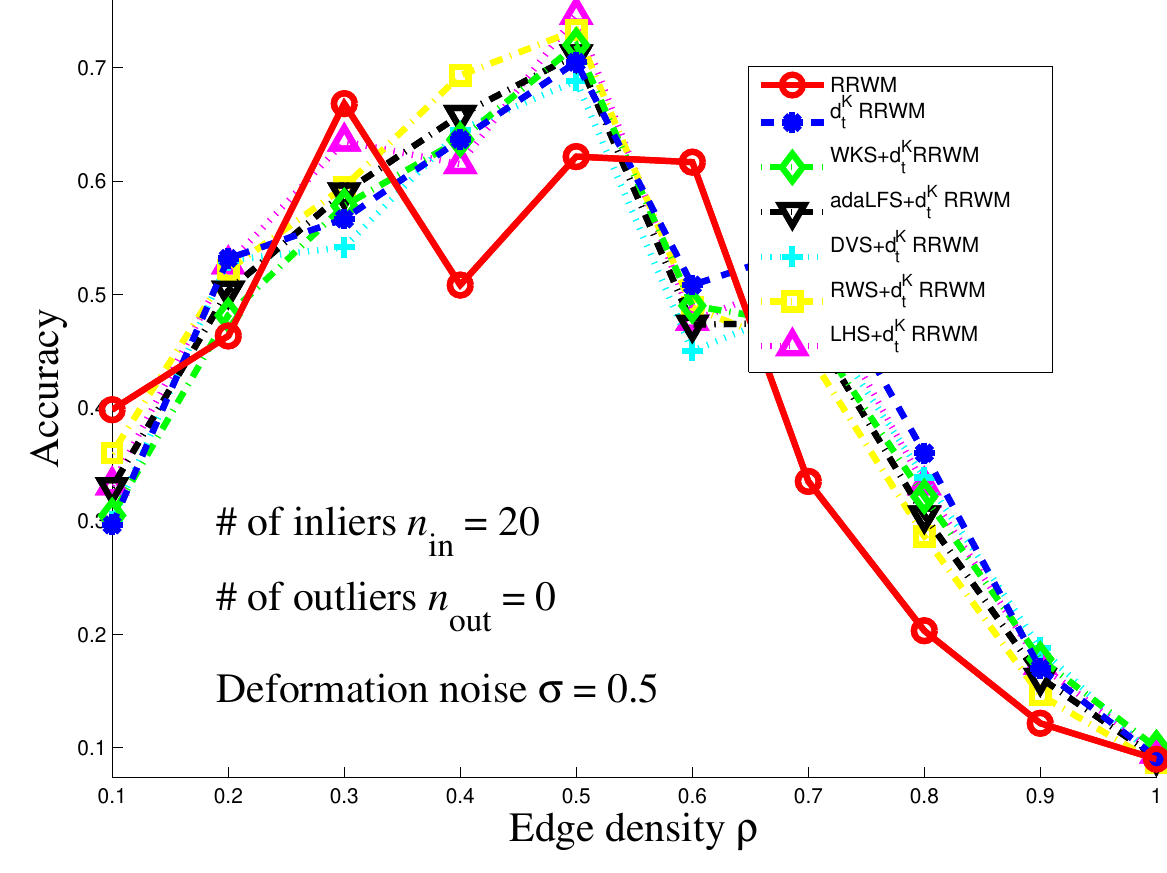}
\caption{}

\end{subfigure} \caption{Matching accuracy of in the IQP setting.}

\label{fig:randiqp} 
\end{figure*}

Second, we test different node signatures together with $d_{t}^{\ck}$
as the pairwise constraint in the IQP setting. The random graphs are
generated according to \cite{cho2010} using their publicly available
code. For a pair of graph $G_{1}$ and $G_{2}$, they share $n_{\text{in}}$
common nodes and $n_{\text{out}}^{(1)}$ and $n_{\text{out}}^{(2)}$
outlier nodes. Edge weights are randomly distributed in $[0,1]$,
and random Gaussian noise ${\cal N}(0,\sigma^{2})$ is added.

In this experiment, we test the matching performance where the IQP
compatibility matrix $W$ includes: i) only $d^{A}(i,j,a,b)$, ii)
only $d_{t}^{\ck}(i,j,a,b)$, iii) $d_{t}^{\ck}(i,j,a,b)$ with different
node signatures, on three different settings: 1. different level of
deformation noise $\sigma$; 2. different number of outliers; 3. different
edge densities $\rho$. Fig. \ref{fig:randiqp} shows the average
matching accuracy. The baseline, shown in red solid curve, is RRWM
using pairwise adjacency distances $d^{A}$ only. With $d_{t}^{\ck}$
substituting $d^{A}$, the matching performance is more tolerant to
noise. Comparing Fig. \ref{fig:sigperf} and Fig. \ref{fig:def-noise},
it can be seen that the large performance gap among node signatures,
however, was marginalized out because of the second order compatibility
constraint $d_{t}^{\ck}$. As shown in Fig. \ref{fig:randiqp}, the
matching accuracy of our proposed method is superior to the baseline
algorithm (red solid line) in all three noisy settings.

\subsection{CMU House Sequence}

\label{ssect:houseexp} In this experiment, we use the CMU House sequence
to test our descriptors. This sequence has been widely used to test
different graph matching algorithms. It consists of 110 frames, and
there are 30 feature points labeled consistently across all frames.
We build fully connected graphs purely based on the geometry of the
feature points, taking the exponential of the normalized Euclidean
distance of the key points as the weights between pair of feature
points. Note this graph setup is different from \cite{cho2010}. In
their original work, they use the Euclidean distance as edge weights,
which could be seen as a dissimilarity measure. While in our frame
work, we used the normalized exponential of the dissimilarity as a
similarity measure, which conforms with the physical meaning of neighboringness.
IQP compatibility matrices $W$ are set up similarly as in Section
\ref{ssect:randgraph}. In the first part of the experiments, We compute
the average matching accuracy of each frame to the rest of frames
in the sequence. Fig. \ref{fig:houseperf} shows the accuracy of the
matching. As can be seen the matching performance improves when $d_{t}^{\ck}$
is used to substitute $d^{A}$. With adaLFS as the first order compatibility,
furthermore, the matching accuracy is improved even further. Fig.
\ref{fig:houseplot} shows an example of the matching between the
first and the last frames of the sequence.

\begin{figure}[ht]
\centering \begin{subfigure}{.32\linewidth} \centering \includegraphics[width=0.96\linewidth]{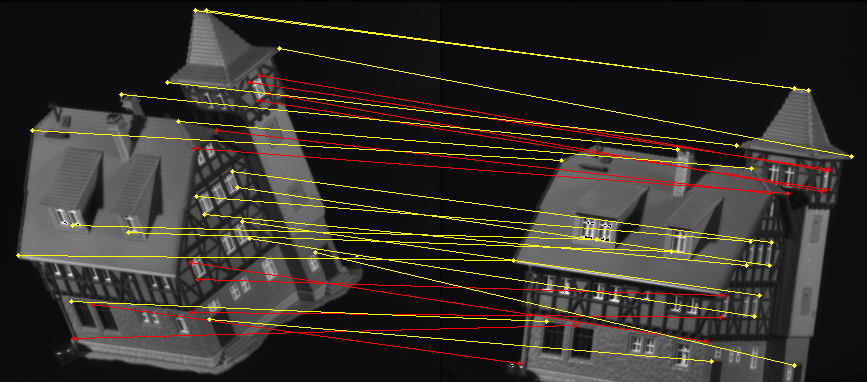}
\caption{{\scriptsize RRWM}}

\end{subfigure} 
\begin{subfigure}{.32\linewidth} \centering \includegraphics[width=0.96\linewidth]{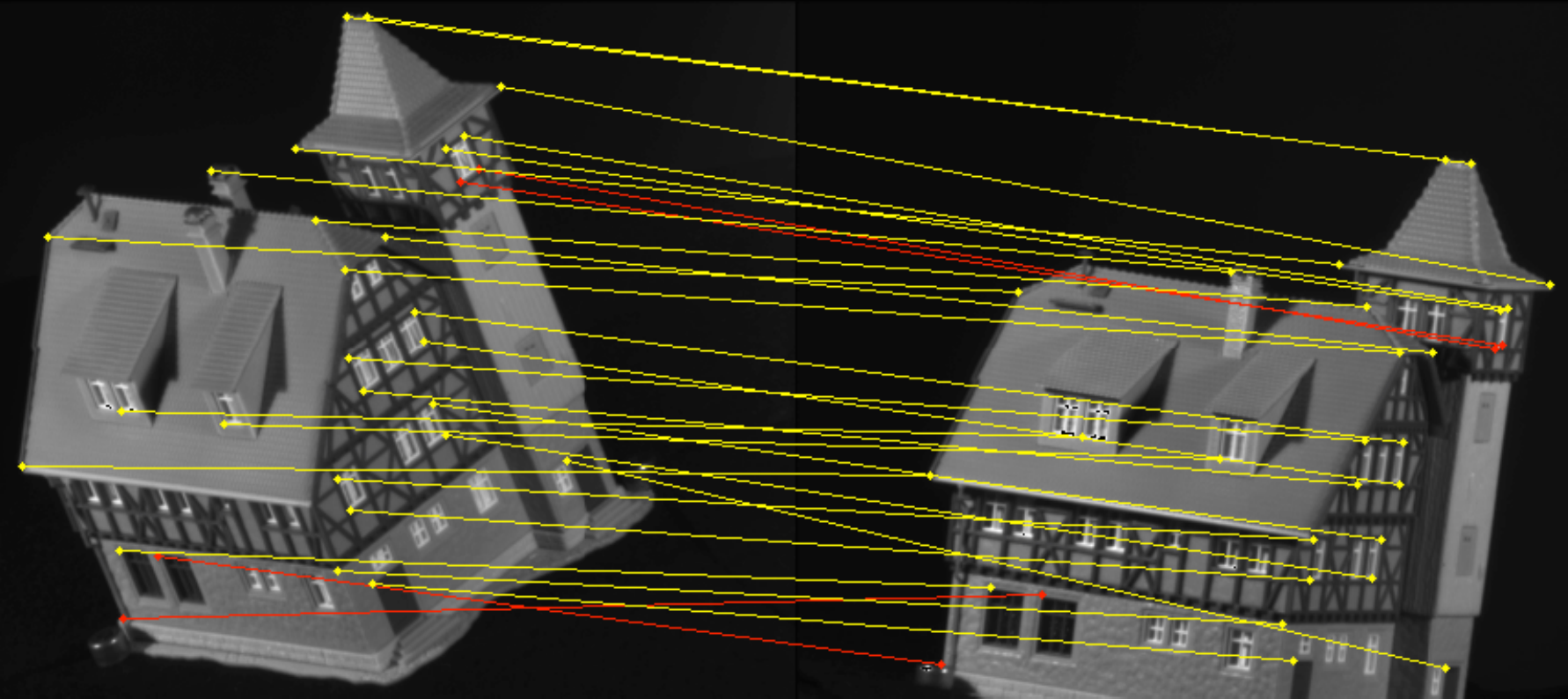}
\caption{{\scriptsize WKS+$d_{t}^{\ck}$ RRWM}}

\end{subfigure} \begin{subfigure}{.32\linewidth} \centering
\includegraphics[width=0.96\linewidth]{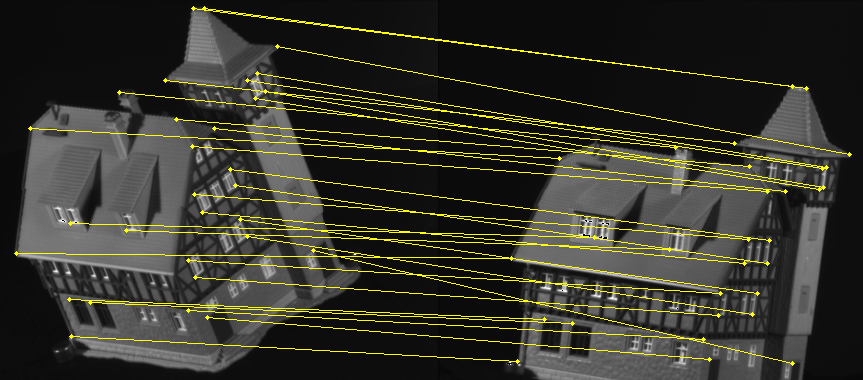} \caption{{\scriptsize adaLFS+$d_{t}^{\ck}$ RRWM}}

\end{subfigure} \caption{Example matching on House sequence. Yellow lines are correct matches,
and red lines are incorrect matches.}

\label{fig:houseplot} 
\end{figure}

In the second part of the experiments, we explore how outliers could
potentially affect the performance of our matching framework. We follow
the protocol in \cite{cho2010}, by randomly select a subset of the
nodes in one of the graphs in matching. Across the sequence, we match
all possible image pairs, space by 10,20,30,40,50,60,70,80,90,100
frames, and compute the average matching accuracy.

Figure \ref{fig:house25} and \ref{fig:house20} show the matching
performance for 25 and 20 randomly selected subset of nodes. It can
be seen that with our optimized kernel and the proposed heat diffusion
distance, our matching performance is greatly improved from the baseline
RRWM algorithm. The matching results in their original work \cite{cho2010}
using RRWM is better than our implementation of RRWM because they
used a different graph structure in matching. The purpose of our comparison
is to show the improvement from using our proposed first order and
second order compatibilities with respect to the baseline adjacency
matrix, while the construction of the adjacency matrices from images
is outside the scope of our paper. The matching accuracy drops when
the number of outliers increases (from 5 to 10), and when the gap increases. 
For smaller gaps with outliers, WKS has a negative effect on matching performance, because WKS has a relatively wide kernel hence more noise tolerant but less informative. In consequence, when the gaps are small, the smoothing WKS kernel instead of enhancing reduces the matching performance. On the contrary, it can be seen that our optimal kernel adapts to the different scenarios much better.

\begin{figure}[ht]
\begin{subfigure}{.48\linewidth} \centering \includegraphics[width=0.96\textwidth]{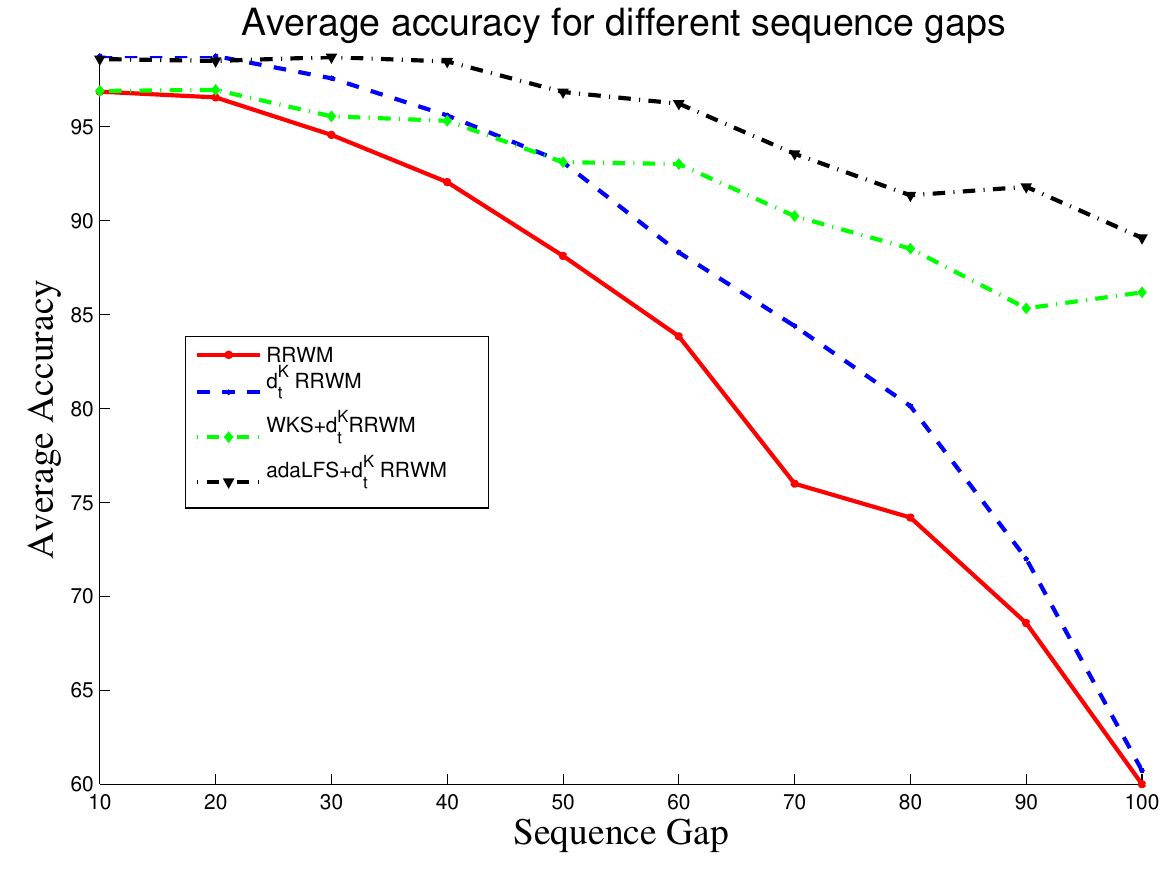}
\caption{30 nodes to 25 nodes}

\label{fig:house25} \end{subfigure} \begin{subfigure}{.48\linewidth}
\centering \includegraphics[width=0.96\textwidth]{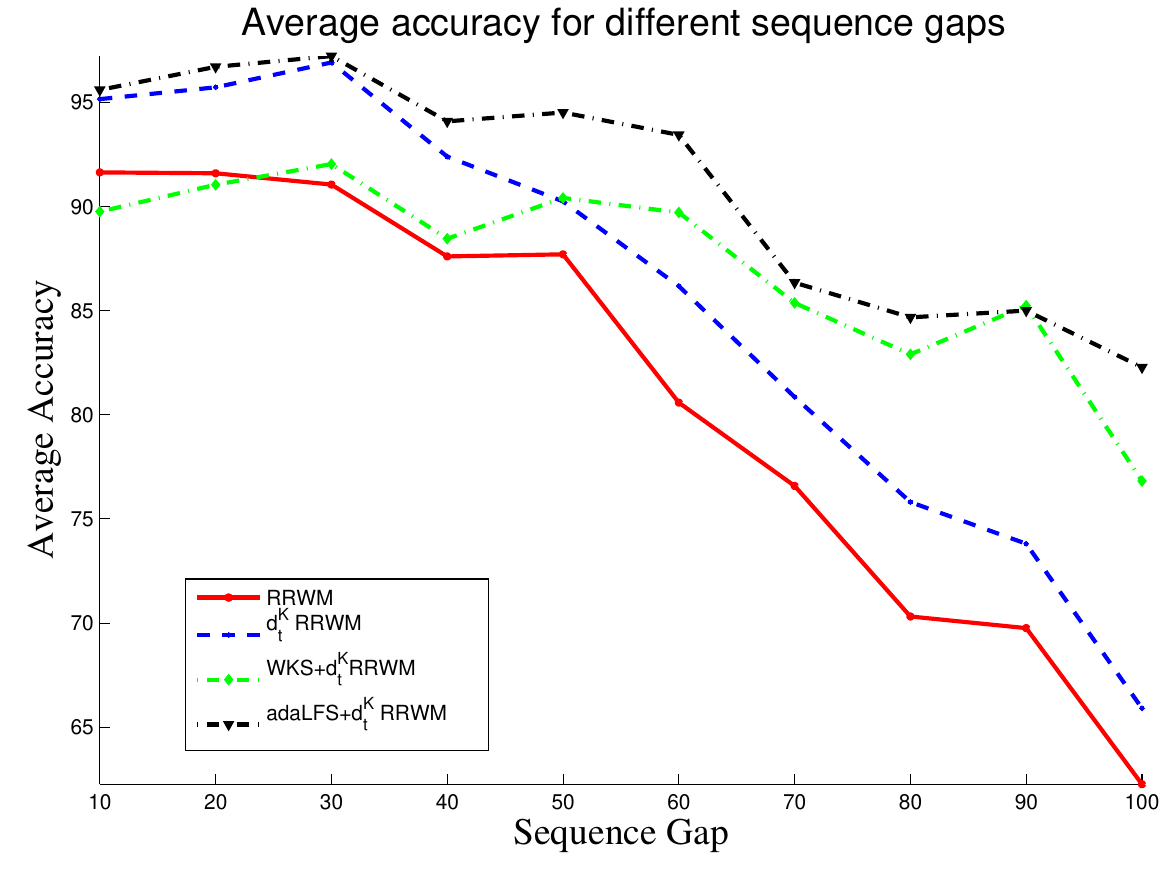}
\caption{30 nodes to 20 nodes}

\label{fig:house20} \end{subfigure}

\caption{Matching with random subset in House sequence.}
\end{figure}

\subsection{Real Image Feature Matching}

In this experiment, we test our descriptor on the real image dataset
used in \cite{cho2010}. This image dataset consists of 30 pairs of
images with labeled feature points. In their original paper, the IQP
affinity matrix $W_{\text{feat}}$ was built considering the similarity
between the appearance-based feature descriptors and the geometric
transformations. Let $d_{\text{feat}}(i,j,a,b)$ be these compatibility
functions from \cite{cho2010}. We add our spectral descriptors as
additional structural compatibility, and set up the affinity matrix
$W$ as 
\[
W_{ia,jb}=\begin{cases}
d_{\text{feat}}(i,j,a,b)+\alpha d_{t}^{\ck}(i,j,a,b) & i\neq j,~a\neq b\\
\beta d(i,a) & i=j,~a=b
\end{cases}
\]
This is done so that we can evaluate the effect of our structural
descriptors on the matching results. In our experiments, we tested
different combination of $\alpha$ and $\beta$. If only considering
$d_{t}^{\ck}$, $\alpha=1$, $\beta=0$ gives the best average accuracy,
and by adding WKS/adaLFS as a node signature constraint, $\alpha=1,\beta=10$
gives the best average accuracy; Table \ref{tab:realmatch} lists
the average accuracies of different methods. Fig. \ref{fig:realimg}
shows an example of matching results.

\begin{table}[ht]
\caption{Average accuracy of real image matching.}

\label{tab:realmatch} \centering %
\begin{tabular}{llll}
\hline 
RRWM  & $d_{t}^{\ck}$  & WKS+$d_{t}^{\ck}$  & adaLFS+$d_{t}^{\ck}$ \tabularnewline
69.92  & 70.68  & 70.70  & 72.57\tabularnewline
\hline 
\end{tabular}
\end{table}

\begin{figure}[ht]
\centering \begin{subfigure}{.32\linewidth} \centering \includegraphics[width=0.96\linewidth]{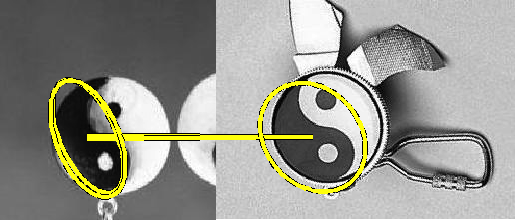}
\caption{RRWM (2/8)}

\end{subfigure} \begin{subfigure}{.32\linewidth} \centering
\includegraphics[width=0.96\linewidth]{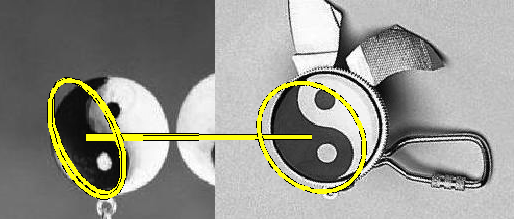} \caption{WKS+$d_{t}^{\ck}$ (2/8)}

\end{subfigure} \begin{subfigure}{.32\linewidth} \centering
\includegraphics[width=0.96\linewidth]{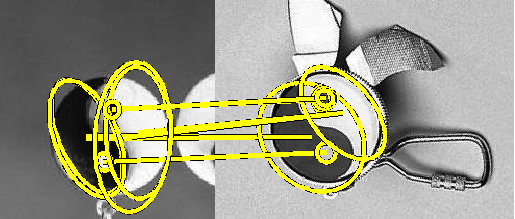} \caption{adaLFS+$d_{t}^{\ck}$ (8/8)}

\end{subfigure} \caption{Matching on real images.}

\label{fig:realimg} 
\end{figure}

\section{Conclusion}

\label{sect:conclude} In this paper, we considered the common quadratic
assignment formulation of weighted graph matching problem, where we
used LFS and the pairwise heat kernel distance as the first and second
order compatibility terms. We have rigorously analyzed their stability
properties; in the case of the first order terms we derived an objective
function that measures both the stability and informativeness of a
given spectral descriptor. By optimizing this objective, we designed
new spectral node signatures tuned to a specific graph to be matched.
Our experiments confirmed that these signatures outperform the existing
spectral node signatures.

This work suggests a number of directions for future research. For
exampe, instead of optimizing the signatures using solely the graph
being matched, it would interesting to explore possibilities for computing
representative graphs for graph collections arising in a given context.
Another direction is to extend our constructions to higher order terms
in the matching scheme \cite{Duchenne2009,Chertok2010}, or to use
them for hypergraph matching \cite{Lee2011}.

{\small \bibliographystyle{ieee}
\bibliography{hks}

\begin{thebibliography}{10}\itemsep=-1pt

\bibitem{Aflalo2011}
Y.~Aflalo, A.~M. Bronstein, M.~M. Bronstein, and R.~Kimmel.
\newblock Deformable shape retrieval by learning diffusion kernels.
\newblock In {\em SSVM'11}, pages 689--700, 2011.

\bibitem{Aubry2011}
M.~Aubry, U.~Schlickewei, and D.~Cremers.
\newblock The wave kernel signature: A quantum mechanical approach to shape
  analysis.
\newblock In {\em {ICCV} - {W}orkshop {4DMOD}}, 2011.

\bibitem{biyi2007}
T.~Biyikoglu, J.~Leydold, and P.~F. Stadler.
\newblock {\em Laplacian eigenvectors of graphs: Perron-Frobenius and
  Faber-Krahn Type Theorems}.
\newblock 2007.

\bibitem{Chertok2010}
M.~Chertok and Y.~Keller.
\newblock Efficient high order matching.
\newblock {\em IEEE Transactions on PAMI}, 32(12):2205 -- 2215, 2010.

\bibitem{cho2010}
M.~Cho, J.~Lee, and K.~M. Lee.
\newblock Reweighted random walks for graph matching.
\newblock In {\em ECCV'10}, pages 492--505, 2010.

\bibitem{cour2006}
T.~Cour, P.~Srinivasan, and J.~Shi.
\newblock Balanced graph matching.
\newblock In {\em NIPS'06}, pages 313--320, 2006.

\bibitem{Duchenne2009}
O.~Duchenne, F.~Bach, I.~Kweon, and J.~Ponce.
\newblock A tensor-based algorithm for high-order graph matching.
\newblock {\em CVPR}, pages 1980 -- 1987, 2009.

\bibitem{egozi2012}
A.~Egozi, Y.~Keller, and H.~Guterman.
\newblock A probabilistic approach to spectral graph matching.
\newblock {\em IEEE Transactions on PAMI}, 99(PrePrints), 2012.

\bibitem{emms2009}
D.~Emms, R.~C. Wilson, and E.~R. Hancock.
\newblock Graph matching using the interference of continuous-time quantum
  walks.
\newblock {\em Pattern Recogn.}, 42(5):985--1002, May 2009.

\bibitem{eshera1984}
M.~Eshera and K.~Fu.
\newblock A graph distance measure for image analysis.
\newblock {\em IEEE Trans. Syst. Man Cybern.}, pages 398--408, 1984.

\bibitem{gold1996}
S.~Gold and A.~Rangarajan.
\newblock A graduated assignment algorithm for graph matching.
\newblock {\em IEEE Trans. Patt. Anal. Mach. Intell.}, 18, 1996.

\bibitem{gori2005}
M.~Gori, M.~Maggini, and L.~Sarti.
\newblock Exact and approximate graph matching using random walks.
\newblock {\em IEEE Trans. Pattern Anal. Mach. Intell.}, 27(7):1100--1111, July
  2005.

\bibitem{HAMMOND2011}
D.~K. Hammond, P.~Vandergheynst, and R.~Gribonval.
\newblock {Wavelets on graphs via spectral graph theory}.
\newblock {\em Applied and Computational Harmonic Analysis}, 30(2):129--150,
  Mar. 2011.

\bibitem{hu2013arxiv}
N.~Hu and L.~Guibas.
\newblock Spectral descriptors for graph matching.
\newblock {\em arXiv:1304.1572}, 2013.

\bibitem{hu2013}
N.~Hu, R.~Rustamov, and L.~Guibas.
\newblock Graph matching with anchor nodes: A learning approach.
\newblock {\em CVPR}, 2013.

\bibitem{jouili2009}
S.~Jouili and S.~Tabbone.
\newblock Graph matching based on node signatures.
\newblock In {\em GbRPR '09}, pages 154--163, 2009.

\bibitem{kuhn1955}
H.~Kuhn.
\newblock The hungarian method for the assignment problem.
\newblock {\em Naval Research Logistic Quarterly}, pages 83 -- 97, 1955.

\bibitem{Lee2011}
J.~Lee, M.~Cho, and K.~M. Lee.
\newblock Hyper-graph matching via reweighted random walks.
\newblock {\em CVPR}, pages 1633 -- 1640, 2011.

\bibitem{leord2005}
M.~Leordeanu and M.~Hebert.
\newblock A spectral technique for correspondence problems using pairwise
  constraints.
\newblock In {\em ICCV '05}, volume~2, pages 1482--1489, 2005.

\bibitem{leord2009}
M.~Leordeanu, M.~Hebert, and R.~Sukthankar.
\newblock An integer projected fixed point method for graph matching and map
  inference.
\newblock In {\em Proceedings NIPS}, December 2009.

\bibitem{10.1109/TPAMI.2013.148}
R.~Litman and A.~M. Bronstein.
\newblock Learning spectral descriptors for deformable shape correspondence.
\newblock {\em IEEE Transactions on PAMI}, 99(PrePrints):1, 2013.

\bibitem{luo2001}
B.~Luo and E.~R. Hancock.
\newblock Structural graph matching using the em algorithm and singular value
  decomposition.
\newblock {\em IEEE Trans. PAMI}, 23(10):1120--1136, Oct. 2001.

\bibitem{newman10networks}
M.~Newman.
\newblock {\em Networks, An Introduction}.
\newblock 2010.

\bibitem{qiu2006}
H.~Qiu and E.~R. Hancock.
\newblock Graph matching and clustering using spectral partitions.
\newblock {\em Pattern Recogn.}, 39(1):22--34, Jan. 2006.

\bibitem{robles-kelly2002}
A.~Robles-Kelly and E.~R. Hancock.
\newblock String edit distance, random walks and graph matching.
\newblock In {\em Joint IAPR International Workshop SSSPR}, pages 104--112,
  2002.

\bibitem{robles-kelly2005}
A.~Robles-Kelly and E.~R. Hancock.
\newblock Graph edit distance from spectral seriation.
\newblock {\em IEEE Trans. Pattern Anal. Mach. Intell.}, 27(3):365--378, Mar.
  2005.

\bibitem{schell2005}
C.~Schellewald and C.~Schn\"{o}rr.
\newblock Probabilistic subgraph matching based on convex relaxation.
\newblock In {\em EMMCVPR'05}, pages 171--186, 2005.

\bibitem{shok1999}
A.~Shokoufandeh and S.~Dickinson.
\newblock Applications of bipartite matching to problems in object recognition.
\newblock {\em Proceedings, ICCV Workshop on Graph Algorithms and Computer
  Vision}, September 1999.

\bibitem{shok2001}
A.~Shokoufandeh and S.~J. Dickinson.
\newblock A unified framework for indexing and matching hierarchical shape
  structures.
\newblock In {\em Proceedings of IWVF-4}, pages 67--84, 2001.

\bibitem{sog-hks-09}
J.~Sun, M.~Ovsjanikov, and L.~Guibas.
\newblock A concise and provably informative multi-scale signature based on
  heat diffusion.
\newblock In {\em SGP}, 2009.

\bibitem{umeyama88}
S.~Umeyama.
\newblock An eigendecomposition approach to weighted graph matching problems.
\newblock {\em IEEE Trans. PAMI}, 10(5):695--703, 1988.

\bibitem{vanwyk2004}
B.~J. van Wyk and M.~A. van Wyk.
\newblock A pocs-based graph matching algorithm.
\newblock {\em IEEE Trans. PAMI}, 26(11):1526--1530, Nov. 2004.

\bibitem{wilson2008}
R.~C. Wilson and P.~Zhu.
\newblock A study of graph spectra for comparing graphs and trees.
\newblock {\em Pattern Recogn.}, 41(9):2833--2841, Sept. 2008.

\bibitem{Wong2006}
P.~C. Wong, H.~Foote, G.~Chin, P.~Mackey, and K.~Perrine.
\newblock {Graph signatures for visual analytics.}
\newblock {\em IEEE Trans Vis Comput Graph}, 12(6):1399--413, 2006.

\bibitem{zas2009}
M.~Zaslavskiy, F.~Bach, and J.-P. Vert.
\newblock A path following algorithm for the graph matching problem.
\newblock {\em IEEE Trans on PAMI}, 31(12):2227--2242, 2009.

\bibitem{zass2008}
R.~Zass and A.~Shashua.
\newblock Probabilistic graph and hypergraph matching.
\newblock {\em CVPR}, 2008.

\bibitem{zhao2007}
G.~Zhao, B.~Luo, J.~Tang, and J.~Ma.
\newblock Using eigen-decomposition method for weighted graph matching.
\newblock In {\em ICIC'07}, pages 1283--1294, 2007.

\bibitem{zhou2012}
F.~Zhou and F.~De~la Torre.
\newblock Factorized graph matching.
\newblock {\em CVPR}, pages 127 -- 134, 2012.

\end{thebibliography}
 } 
\end{document}